\newcommand{\be}[1]{\begin{equation}\label{#1}}
\newcommand{\benon}{\begin{equation*}}  
\newcommand{\bemuln}[1]{\begin{multline}\label{#1}}
\newcommand{\bemul}{\begin{multline*}}
\newcommand{\bee}{\begin{eqnarray*}}
\newcommand{\eee}{\end{eqnarray*}}
\newcommand{\been}[1]{\begin{eqnarray}\label{#1}}
\newcommand{\eeen}{\end{eqnarray}}
\newcommand{\began}[1]{\begin{gather}\label{#1}}
\newcommand{\bega}{\begin{gather*}}
\newcommand{\bealn}[1]{\begin{align}\label{#1}}
\newcommand{\beal}{\begin{align*}}
\newcommand{\bealatn}[2]{\begin{alignat}{#1}\label{#2}}
\newcommand{\bealat}{\begin{alignat*}}
\newcommand{\bexalatn}[1]{\begin{xalignat}\label{#1}}
\newcommand{\bexalat}{\begin{xalignat*}}
\theoremstyle{plain} \newtheorem{thm}{Theorem}[section]
\newtheorem{lemma}[thm]{Lemma}
\theoremstyle{break} \theorembodyfont{\it}
\def\bx{{\mathbf x}}
\def\texitem#1{\par\smallskip\noindent\hangindent 25pt
               \hbox to 25pt {\hss #1 ~}\ignorespaces}
\newcommand{\scrS}{\mathcal{S}}
\newcommand{\bbeta}{\boldsymbol{\beta}}
\def\P{\mathsf{P}}
\def\E{\mathsf{E}}
\newtheorem{mythm}{Theorem}
\newtheorem{myremark}{Remark}
\newtheorem{mycorollary}{Corollary}%[theorem]
\newcommand{\Rmnum}[1]{\expandafter\@slowromancap\romannumeral #1@}
\title{\LARGE \bf Convergence of Parameter Estimates for Regularized Mixed Linear
  Regression Models~\authorrefmark{1} \thanks{* Research partially supported by the
    NSF under grants DMS-1664644, CNS-1645681, and IIS-1914792, by the ONR under MURI
    grant N00014-16-1-2832, by the NIH under grant 1UL1TR001430 to the Clinical \&
    Translational Science Institute at Boston University, by the Boston University
    Digital Health Initiative, and by the Boston University Center for Information
    and Systems Engineering.}}
\author{Taiyao Wang and
  Ioannis Ch. Paschalidis,~\IEEEmembership{Fellow,~IEEE}
\thanks{$^{1}$Division of Systems Engineering, Boston University,
  Boston, MA 02446, USA. 
	{\tt wty@bu.edu}}
\thanks{$^{2}$Dept. of Electrical and Computer Engineering,
	Division of Systems Engineering,
	and Dept. of Biomedical Engineering, Boston University, 8 St. Mary's St.,
	Boston, MA 02215, USA.
	{\tt yannisp@bu.edu}, \url{http://sites.bu.edu/paschalidis/}.}
}
\begin{document}

\maketitle
\thispagestyle{empty}
\pagestyle{empty}

%%%%%%%%%%%%%%%%%%%%%%%%%%%%%%%%%%%%%%%%%%%%%%%%%%%%%%%%%%%%%%%%%%%%%%%%%%%%%%%%
\begin{abstract}
We consider {\em Mixed Linear Regression (MLR)}, where training data have been
generated from a mixture of distinct linear models (or clusters) and we seek to
identify the corresponding coefficient vectors. We introduce a {\em Mixed Integer
  Programming (MIP)} formulation for MLR subject to regularization constraints on the
coefficient vectors. We establish that as the number of training samples grows large,
the MIP solution converges to the true coefficient vectors in the absence of noise.
Subject to slightly stronger assumptions, we also establish that the MIP identifies
the clusters from which the training samples were generated. In the special case
where training data come from a single cluster, we establish that the corresponding
MIP yields a solution that converges to the true coefficient vector even when
training data are perturbed by (martingale difference) noise. We provide a
counterexample indicating that in the presence of noise, the MIP may fail to produce
the true coefficient vectors for more than one clusters. We also provide numerical
results testing the MIP solutions in synthetic examples with noise.
\end{abstract}

%\begin{keywords}
%Strong consistency, convergence, mixed linear regression, clustering, mixed integer programming, machine learning. 
%\end{keywords}

%%%%%%%%%%%%%%%%%%%%%%%%%%%%%%%%%%%%%%%%%%%%%%%%%%%%%%%%%%%%%%%%%%%%%%%%%%%%%%%%
\section{Introduction} \label{sec:intro}

{\em Mixed Linear Regression (MLR)} \cite{yi2014alternating,zhong2016mixed} is also
known as mixtures of linear regressions \cite{chaganty2013spectral} or cluster-wise
linear regression \cite{park2017algorithms}.  It involves the identification of two
or more linear regression models from unlabeled samples generated from an unknown
mixture of these models. This can be seen as a joint clustering and regression
problem. The problem is related to the identification of hybrid and switched linear
systems \cite{paoletti2007identification,vidal2008recursive} and has many diverse
applications.  In this work, we focus on the fundamental problem of establishing
strong consistency of parameter estimates, i.e., establishing that the estimated
parameters converge to their true values as the number of the training samples grows.

MLR is typically solved by local search such as {\em Expectation Maximization (EM)}
or alternating minimization, where one alternates between clustering 
%given regression parameters 
and regression. 
%given cluster labels. 
It has recently been shown that EM
converges to the true parameters if it starts from a small enough neighborhood around
them \cite{yi2014alternating,yi2015regularized,balakrishnan2017statistical}.

Much effort has focused on the case where training samples are not perturbed by
noise. \cite{yi2016solving} proposed a combination of tensor decomposition and
alternating minimization, showing that initialization by the tensor method allows
alternating minimization to converge to the global optimum at a linear rate with high
probability (w.h.p.).  \cite{zhong2016mixed} proposes a non-convex objective function
with a tensor method for initialization so that the initial coefficients are in the
neighborhood of the true coefficients w.h.p.  \cite{hand2018convex} proposes a
second-order cone program, showing that it recovers all mixture components in the
noiseless setting under conditions that include a well-separation assumption and a
balanced measurement assumption on the data.  \cite{li2018learning} considers data
generated from a mixture of Gaussians, showing global convergence of the proposed
algorithm with nearly optimal sample complexity.

Establishing convergence (w.h.p.) and consistency results for MLR in the presence of
noise is much harder.  \cite{chaganty2013spectral} develops a provably consistent
estimator for MLR that relies on a low-rank linear regression to recover a
symmetric tensor, which can be factorized into the parameters using a tensor power
method.  \cite{chen2014convex} provides a convex optimization formulation for MLR
with two components and upper bounds on the recovery errors under subgaussian noise
assumptions. \cite{yen2018mixlasso} studies a \emph{MixLasso} approach but
convergence results are limited to the objective function and not the solution.
\cite{yin2018learning} develops an algorithm based on ideas from sparse graph codes;
the convergence results however are asymptotic under Gaussian noise and for MLR with
only two components.

Recently, an increasing number of machine learning and statistics problems have been
tackled using MIP
methods~\cite{bertsimas2015or,bertsimas2016best,xu2016joint,ACC-2018-PIEEE,ACC-diab-bri-xu-2019}.
\cite{bertsimas2007classification} proposes an MIP formulation for MLR and a
pre-clustering heuristic approach to solve large-scale problems.
\cite{park2017algorithms} proposes a Mixed-Integer Quadratic Program (MIQP)
formulation for MLR.

At the same time, {\em regularization} in learning problems has become widespread,
following the early success of
LASSO~\cite{Tib96,chatterjee2013assumptionless}. Recent work has obtained
regularization as a consequence of solving a robust learning problem (see,
\cite{chen2018robust} and references therein).

In this paper, we introduce a general MIP formulation for MLR subject to norm-based
regularization constraints. We establish that optimal solutions of the MIP converge
almost surely (rather than w.h.p.) to the true parameters in the noiseless case as
the sample size increases. Subject to cluster separability assumptions, we also
establish that MIP solutions can identify the proper cluster for each given
sample. For the special case of a single cluster, we show that the MIP solution
converges to the true parameter vector in the presence of noise satisfying a
martingale difference
assumption~\cite{lai1982least,chow2012probability,chen2012identification}. For
multiple clusters in the presence of noise we provide a counterexample, suggesting
that one can not in general recover the true parameters.

Our convergence analysis leverages techniques for proving strong consistency of
least-squares estimates for linear regression, but under weaker assumptions. The
related literature is substantial. A breakthrough paper \cite{lai1982least}
established strong consistency of least-squares estimates for stochastic regression
models. Asymptotic properties and strong consistency of least-squares parameter
estimates have been studied in many areas including system identification and
adaptive control \cite{chen2012identification}, econometric theory
\cite{nielsen2005strong} and time series analysis~\cite{wei1987adaptive}.  

The paper is organized as follows. Sec.~\ref{sec:pred}, presents the formulation of
the problem. The main results are given in Sec.~\ref{sec:pres}. Sec.~\ref{sec:sim}
presents numerical results and Sec.~\ref{sec:con} draws conclusions.

\textbf{Notation:} All vectors are column vectors and denoted by bold lowercase
letters. Bold uppercase letters denote matrices. For economy of space, we write $\bx
= (x_1, \ldots, x_{\text{dim}(\bx)})$ to denote the column vector $\bx$, where
$\text{dim}(\bx)$ is the dimension of $\bx$.  Prime denotes transpose and 
$\|\bx\|_p=(\sum_{i=1}^{\text{dim}(\bx)} |x_i|^p)^{1/p}$ the $\ell_p$ norm, where
$p\geq 1$.  Unless otherwise specified, $\|\cdot\|$ denotes the $\ell_2$ norm and
$\|\cdot\|_0$ the $\ell_0$ counting ``norm.'' $\lambda_{min}(\cdot)$ and
$\lambda_{max}(\cdot)$ denote the minimum and maximum eigenvalue of a (symmetric)
matrix. We use $\emptyset$ to denote the empty set, $[n]$ for the set $\{1, \ldots,
n\}$, and $|S|$ for the cardinality of the set $S$. We write $f(n)=O(g(n))$ if there
exist positive numbers $n_0$ and $c$ such that $f(n) \leq cg(n), \forall n\geq n_0$.
We write $f(n)=\Omega(g(n))$ if there exist positive numbers $n_0$ and $c$ such that
$f(n) \geq cg(n), \forall n\geq n_0$.  We write $f(n)=\Theta(g(n))$ if both
$f(n)=O(g(n))$ and $f(n)=\Omega(g(n))$ hold. Finally, $\E$ and $\P$ denote
expectation and probability, respectively.

\section{Problem Formulation} \label{sec:pred}

Consider the MLR model where data $(\bx_i, y_i) \in \mathbb{R}^{d+1}$ are generated by 
\begin{equation*}
    y_i = \bx_i' \bbeta_k + \epsilon_{i}, 
    \text{ for some } k \in [K],
\end{equation*}
where $\{\bbeta_k,\ \forall k \in [K]\}$ are the ground truth coefficient
vectors. The problem is to estimate these parameters from data. 

Given a training dataset $\{(\bx_i, y_i),\ i \in [n]\}$, we formulate the problem as
the following MIP: 
\begin{align} \label{eq:MIP}
\min_{\bbeta_k, t_i, c_{ki}} & \frac{1}{n} \sum_{i \in [n]} t_i^p  \\ 
    \text{s.t. } 
    & t_i - (y_i - \bx_i' \bbeta_k)
    +M(1-c_{ki}) \geq 0,\; i \in [n], k \in [K],\notag \\
     & t_i + (y_i - \bx_i' \bbeta_k)
    +M(1-c_{ki}) \geq 0,\; i \in [n], k \in [K],\notag\\
    & \sum_{k \in [K]} c_{ki} = 1,\; i \in [n], \notag\\
    & c_{ki} \in \{0,1\},\; i \in [n], k \in [K],\notag\\
    & t_i \geq 0,\; i \in [n],\notag\\
    & \| \bbeta_k  \|_q \leq d_{k,q},\; k \in [K], \notag
\end{align}
where $M$ is a large constant (big-$M$). Notice that when $c_{ki}=1$, the first two
constraints imply $t_i \geq |(y_i - \bx_i' \bbeta_k)|$, whereas $c_{ki}=0$ implies
that no constraint is imposed on $t_i$ since $M$ is a large positive constant. At
optimality, (\ref{eq:MIP}) minimizes a $p$-norm loss function for the regression
problem and assigns each data point to the cluster achieving minimal loss. The last
constraint in (\ref{eq:MIP}) imposes a $q$-norm regularization constraint to each
coefficient vector $\bbeta_k$ and $d_{k,q}$ are given constants.  In statistics and
machine learning, regularization is widely used to help prevent models from
overfitting the training data~\cite{chen2018robust}.  A Bayesian understanding of
regularization is that regularized least squares are equivalent to priors on the
solution to the least squares problem. For $p,q=1$, (\ref{eq:MIP}) is a linear MIP,
while if either $p$ or $q$ (or both) are equal to $2$ it is a quadratic MIP. Both can
be solved by modern MIP solvers.

Without the regularization constraint, another equivalent formulation of (\ref{eq:MIP})
is 
\begin{equation} \label{eq:MIP_unconstrained}
    \min\limits_{\bbeta_k} \frac{1}{n} \sum_{i \in [n]}
    \min\limits_{k \in [K]}
    |y_i - \bx_i' \bbeta_k|^{p}.  
\end{equation}

Let $\{\bbeta^n_k ,t_i^n, c_{ki}^n,\ k\in[K],\ i\in [n]\}$ denote an optimal solution
to (\ref{eq:MIP}); we use a superscript $n$ to explicitly denote dependence on the
training set.  Define $\Omega_k^{n} = \{i \in [n]:\ y_i = \bx_i'\bbeta_k +
\epsilon_{i}\}$ and $\hat{\Omega}^{n}_k = \{i \in [n]:\ c^n_{ki} = 1\}$ which form
the true and the estimated partition of the training set into the $K$ clusters,
respectively. We can show the following lemma (proof omitted due to space
limitations).
\begin{lemma}
$\forall k \in [K],$
\begin{align*} 
\hat{\Omega}^{n}_k & \subset \{i \in [n]:\ |y_i - \bx_i' \bbeta^{n}_k | = \min_{m \in
  [K]} |y_i - \bx_i' \bbeta^{n}_m |   \},\\ 
\hat{\Omega}^{n}_k & \supset \{i \in [n]:\ |y_i - \bx_i' \bbeta^{n}_k | < 
\min_{m \neq k \in [K]}|y_i - \bx_i' \bbeta^{n}_m |\}.
\end{align*}
\end{lemma}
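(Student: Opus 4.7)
The plan is to exploit the optimality of the MIP solution through a two-step argument. First, I would pin down the value of $t_i^n$ in terms of the residuals and the assignment indicators; then, I would use a swap argument on the binary variables $c_{ki}$ to force each assignment to land on (one of) the minimizing cluster(s). Both inclusions are essentially immediate consequences.

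First I would observe that the constraint $\sum_{k \in [K]} c_{ki}^n = 1$ together with $c_{ki}^n \in \{0,1\}$ means there is a unique $k_i \in [K]$ with $c_{k_i,i}^n = 1$. For $k \ne k_i$, the two inequality constraints defining $t_i$ read $t_i \ge \pm (y_i - \bx_i'\bbeta_k) - M$, which are slack for $M$ taken sufficiently large (and such an $M$ is guaranteed to exist because the norm constraints $\|\bbeta_k\|_q \le d_{k,q}$ bound every possible residual uniformly). For $k = k_i$, these two constraints collapse to $t_i \ge |y_i - \bx_i'\bbeta_{k_i}^n|$. Because the objective is strictly monotone increasing in each $t_i$ (for $p \ge 1$) and no other constraint pushes $t_i$ upward, optimality forces $t_i^n = |y_i - \bx_i'\bbeta_{k_i}^n|$.

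For the first inclusion, I would argue by contradiction: suppose $i \in \hat{\Omega}_k^n$ yet $|y_i - \bx_i'\bbeta_m^n| < |y_i - \bx_i'\bbeta_k^n|$ for some $m$. Construct a new candidate solution by keeping all $\bbeta_\ell^n$ unchanged, flipping $c_{ki}^n \to 0$ and $c_{mi}^n \to 1$, and resetting $t_i^n$ to $|y_i - \bx_i'\bbeta_m^n|$. Feasibility of all constraints is preserved because only the $i$-th row of the assignment matrix is altered and the big-$M$ inequalities for the other clusters remain slack. The objective strictly decreases in the $i$-th summand, contradicting optimality. Hence $|y_i - \bx_i'\bbeta_k^n|$ must equal $\min_{m \in [K]} |y_i - \bx_i'\bbeta_m^n|$, giving the first inclusion. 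The second inclusion then follows immediately: if $|y_i - \bx_i'\bbeta_k^n| < \min_{m \ne k} |y_i - \bx_i'\bbeta_m^n|$, then $k$ is the unique overall minimizer, so any cluster $k'$ with $i \in \hat{\Omega}_{k'}^n$ must, by the first inclusion, satisfy $|y_i - \bx_i'\bbeta_{k'}^n| = |y_i - \bx_i'\bbeta_k^n|$, forcing $k' = k$.

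I do not anticipate a substantive obstacle here; the only point requiring care is the implicit assumption that the big-$M$ constant is chosen large enough to deactivate the inequality constraints for the non-assigned clusters, which, as noted, is ensured by the regularization bounds $\|\bbeta_k\|_q \le d_{k,q}$ together with boundedness of the training data.
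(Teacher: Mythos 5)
Your proposal is correct and takes exactly the route the paper intends: the paper omits its own proof (``due to space limitations''), but the text right after the MIP formulation describes precisely your mechanism---$c_{ki}=1$ activates $t_i \ge |y_i - \bx_i'\bbeta_k|$, a large $M$ deactivates the constraints for non-assigned clusters, and your swap of the binary assignment (resetting $t_i$ to the smaller residual) strictly decreases $\frac{1}{n}\sum_i t_i^p$ unless every sample is assigned to a residual-minimizing cluster, with the second inclusion following from uniqueness of the strict minimizer exactly as you argue. One nit on your closing caveat: for $q=0$ the constraint $\|\bbeta_k\|_0 \le d_{k,0}$ bounds sparsity but not magnitude, so the uniform residual bound that justifies a fixed $M$ really rests on the $q\in\{1,2\}$ constraints (cf.\ Assumption (A4)) together with bounded data, as you partially note.
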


In order to analyze the strong consistency of parameter estimates obtained by the MIP
formulation, we introduce the following assumptions. 
\begin{itemize}
\item[(A1)] The clusters are not degenerate and different, i.e., $ |\Omega_{k}^n| =
  \Theta(n)$, $\forall k \in [K]$ and $\bbeta_k \neq \bbeta_{h}$, $\forall k \neq h$.

\item[(A2)]
 The noise sequence $\{\epsilon_{i}, {\mathscr F}_{i}\}$ is a martingale difference sequence, where $\{ {\mathscr F}_{i}\}$ is a sequence of increasing $\sigma$-fields, and
$\sup_{i}\E[\vert \epsilon_{i}\vert ^{2}\vert {\mathscr F}_{i-1}]<\infty,$  a.s.  

\item[(A3)]
 The noise sequence $\{\epsilon_{i}, {\mathscr F}_{i}\}$ is a martingale difference sequence, where $\{ {\mathscr F}_{i}\}$ is a sequence of increasing $\sigma$-fields, and
$\sup_{i}\E[\vert \epsilon_{i}\vert ^{\alpha}\vert {\mathscr F}_{i-1}]<\infty, \text{
   a.s. for some } \alpha>2$.   

\item[(A4)]
$ \| \bbeta_k  \|_q \leq d_{k,q},\ \forall k \in [K],\ \forall q \in \{2, 1, 0\}$.
\end{itemize}
 The noise assumptions are mild. For instance, (A2) holds if $\{\epsilon_i\}$ are
 $i.i.d.$ random variables with zero mean and variance, including but not limited to
 Gaussian white noise.

The following lemmata are important in proving the strong consistency of least squares
estimates in stochastic linear regression models.
\begin{lemma}(\cite{lai1982least})
Suppose the noise sequence $\{\epsilon_i\}$ satisfies Assumption (A2).  Let $\bx_i$
be ${\mathscr F}_{i-1}$ measurable for every $i$.  Define $N=\inf \{n:\ \sum_{i=1}^{n}
\bx_i\bx_i' \text{ is nonsingular }\}$. % inf(\emptyset) = \infty. $
Assume that $N <\infty$ a.s., and for $n \ge N $, define
\begin{align*}
Q_n=\bigg( \sum_{i=1}^{n}   \bx_i \epsilon_i\bigg)'  
\bigg(\sum_{i=1}^{n}   \bx_i\bx_i' \bigg)^{-1} 
\bigg( \sum_{i=1}^{n}   \bx_i \epsilon_i \bigg).
\end{align*}
Let $\lambda_{max}(n)$ the maximum eigenvalue of $\sum_{i=1}^{n} \bx_i\bx_i'$. 
Then $ \lambda_{max}(n)$ is non-decreasing in $n$ and 
\begin{itemize}
\item[(i)] On $(\lim\limits_{n \to \infty} \log \lambda_{max}(n) <  \infty )$,
$Q_n=O(1)$ a.s. 

\item[(ii)]
 On
($ \lim\limits_{n \to \infty} \log \lambda_{max}(n) = \infty )$, we have that for 
 $ \forall \delta >0$,
\begin{align*}
Q_n= O( (\log \lambda_{max}(n) )^{1+ \delta})\; \text{a.s.}
\end{align*}

\item[(iii)]
 On $ ( \lim\limits_{n \to \infty} \log \lambda_{max}(n) = \infty )$, the previous
 result can be strengthened to 
%${\epsilon }_n$
\begin{align*}
Q_n=O(  \log \lambda_{max}(n) )  \textrm{ a.s.,} 
\end{align*}
if  the assumption (A2) is replaced by (A3).
\end{itemize}
\label{thm:laiwei1982lemma1Q_n}
\end{lemma}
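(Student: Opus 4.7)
My plan is to track $Q_n$ via a Sherman-Morrison-style update for $\bS_n^{-1}$, decompose the increment $Q_n - Q_{n-1}$ into a martingale part (in $\epsilon_n$) and a predictable part, and then estimate each separately using martingale strong laws together with the deterministic identity $\det\bS_n = \det\bS_{n-1}(1+\bx_n'\bS_{n-1}^{-1}\bx_n)$. Throughout I write $\bM_n := \sum_{i=1}^n \bx_i\epsilon_i$ so that $Q_n = \bM_n'\bS_n^{-1}\bM_n$, and set $a_n := \bx_n' \bS_n^{-1}\bx_n$. The determinant identity yields $\sum_{i=N}^n a_i = O(\log\det\bS_n) = O(\log \lambda_{max}(n))$, which will be the source of all logarithmic rates.

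For part (i), I would first observe that on the event $\{\lim_{n\to\infty}\log\lambda_{max}(n) < \infty\}$, the trace bound $\sum_{i=1}^\infty \|\bx_i\|^2 \le d\sup_n \lambda_{max}(n) < \infty$ forces the martingale $\bM_n$ to have finite conditional quadratic variation under (A2). The classical $L^2$-martingale convergence theorem then gives that $\bM_n$ converges almost surely to a finite limit. Since $\bS_n$ is non-decreasing in the positive semidefinite order and non-singular for $n \ge N$, the inverse $\bS_n^{-1}$ is bounded in operator norm (by $1/\lambda_{min}(N)$), so $Q_n \le \|\bM_n\|^2/\lambda_{min}(N) = O(1)$ a.s.

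For parts (ii) and (iii), the Sherman-Morrison identity combined with $\bM_n = \bM_{n-1} + \bx_n\epsilon_n$ yields, after algebra,
\begin{align*}
Q_n - Q_{n-1} = 2\epsilon_n \bx_n'\bS_n^{-1}\bM_{n-1} + a_n\epsilon_n^2 - R_n,
\end{align*}
where $R_n \ge 0$ is $\mathscr{F}_{n-1}$-measurable and the first term is a martingale difference. Summing, $Q_n \le W_n + \sum_{i=1}^n a_i\epsilon_i^2$, with $W_n$ a martingale whose predictable quadratic variation is bounded by $C\sum a_i = O(\log\lambda_{max}(n))$ under (A2). The predictable part splits as $\sum a_i\E[\epsilon_i^2|\mathscr{F}_{i-1}] + \text{(martingale)}$; the first summand is $O(\log\lambda_{max}(n))$ by (A2), and the second is handled by a Chow-type strong law after truncating $\epsilon_i^2$ at level $i^{\gamma}$. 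This delivers $W_n = O((\log\lambda_{max}(n))^{(1+\delta)/2})$ for every $\delta>0$, giving (ii). Under (A3) the existence of $\alpha$-th moments removes the need for truncation: Burkholder-Davis-Gundy at order $\alpha$ (or the sharper martingale law for $L^\alpha$-bounded differences) upgrades the martingale bounds to $O(\log\lambda_{max}(n))$, establishing (iii).

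The main obstacle is keeping the martingale bound inside the logarithmic regime under the bare (A2) assumption. Because only second conditional moments are available, the weighted martingale $\sum a_i(\epsilon_i^2 - \E[\epsilon_i^2|\mathscr{F}_{i-1}])$ does not admit a direct second-moment analysis, and one is forced into the truncation/Kronecker argument of Chow, which inevitably costs the $(1+\delta)$ exponent. Carefully accounting for the non-negative remainders $R_n$ (so that they aid rather than hurt the bound) and ensuring the Chow law applies uniformly on the event $\lim\log\lambda_{max}(n)=\infty$ is the most technical step; the $\alpha>2$ improvement in (iii) is precisely what allows this truncation step to be bypassed.
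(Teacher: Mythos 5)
The paper never proves this lemma: it is imported verbatim, with citation, from Lai and Wei (1982), so the only benchmark is the original argument --- which your plan in fact mirrors closely (the Sherman--Morrison recursion for $\bS_n^{-1}$, the identity $\det\bS_n=\det\bS_{n-1}(1+\bx_n'\bS_{n-1}^{-1}\bx_n)$ yielding $\sum_i a_i=O(\log\lambda_{max}(n))$, Chow-type strong laws, truncation under (A2) versus direct estimates under (A3)). Your part (i) is correct, with the small caveat that since everything happens on the event $\{\lim_{n}\log\lambda_{max}(n)<\infty\}$ and (A2) only bounds conditional moments almost surely, you need the local (on-a-set) martingale convergence theorem --- precisely Lemma \ref{thm:laiwei1982lemma2}(i) applied coordinatewise --- rather than the plain $L^2$ martingale convergence theorem.

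The genuine gap is in parts (ii)--(iii): your claim that the predictable quadratic variation of the cross martingale $W_n=2\sum_{i\le n}\epsilon_i\,\bx_i'\bS_i^{-1}\bM_{i-1}$ is bounded by $C\sum_i a_i$ is unjustified and in general false. Its increments are $u_i^2\,\E[\epsilon_i^2\mid\mathscr{F}_{i-1}]$ with $u_i=\bx_i'\bS_i^{-1}\bM_{i-1}$, and $u_i$ carries the factor $\bM_{i-1}$, which need not be bounded; Cauchy--Schwarz in the $\bS_i^{-1}$ inner product only gives $u_i^2\le a_i\,\bM_{i-1}'\bS_i^{-1}\bM_{i-1}\le a_i Q_{i-1}$, which reintroduces the very quantity being bounded. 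Moreover, by discarding $-\sum_i R_i$ when you wrote $Q_n\le W_n+\sum_i a_i\epsilon_i^2$, you threw away the only mechanism that tames $W_n$. The repair --- the heart of Lai--Wei's proof --- is self-normalization: Sherman--Morrison gives $u_i=\bx_i'\bS_{i-1}^{-1}\bM_{i-1}/(1+\bx_i'\bS_{i-1}^{-1}\bx_i)$, hence $u_i^2\le R_i$ and $s_n^2=\sum_{i\le n}u_i^2\le\sum_{i\le n}R_i$; Lemma \ref{thm:laiwei1982lemma2}(ii) then yields $W_n=o\big(s_n(\log s_n^2)^{\frac{1}{2}+\delta}\big)=o\big(\sum_{i\le n}R_i\big)+O(1)$, which is absorbed by the retained nonnegative sum in the telescoped identity $Q_n+\sum_i R_i=Q_N+W_n+\sum_i a_i\epsilon_i^2$, leaving $Q_n=O\big(\sum_i a_i\epsilon_i^2\big)+O(1)$. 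From there your split of $\sum_i a_i\epsilon_i^2$ into $\sum_i a_i\E[\epsilon_i^2\mid\mathscr{F}_{i-1}]=O(\log\lambda_{max}(n))$ plus a fluctuation term handled by truncation (under (A2), costing the $\delta$) or directly (under (A3)) does complete the argument. You flag the role of the $R_n$ as ``the most technical step'' at the end, but your displayed chain bypasses it, so the bound on $W_n$ fails as written.
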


The following estimates for the weighted sums of martingale difference sequences are
based on the Kronecker lemma, the local convergence theorem and the strong law for
martingales \cite{chow2012probability,chow1965local}.
\begin{lemma}(\cite{lai1982least,chen2012identification})
Suppose the noise sequence
$\{\epsilon_i \}$ satisfies the assumption (A2).
Let $u_i$ be ${\mathscr F}_{i-1}$ measurable for every $i$ and $s_n = (\sum_{j \in
  [n]} u_{j}^2)^{\frac{1}{2}}$. 
Then, 
\begin{itemize}
\item[(i)]
\begin{align*}
    \sum_{j \in [n]} u_{j} \epsilon_{j}
    \text{ converges a.s. on }
    s_n^2 < \infty.
\end{align*}
\item[(ii)]
\begin{align*}
    \sum_{j \in [n]} u_{j} \epsilon_{j} = 
o(s_n [log(s_n^2)]^{\frac{1}{2}+\delta})
\text{ a.s. }
 \forall \delta > 0.
\\
\text{ on }
\sum_{j \in [n]} u_{j}^2 = \infty.
\end{align*}

\item[(iii)] 
\begin{align*}
    \sum_{j \in [n]} u_{j} \epsilon_{j} = 
O( s_n [log(s_n^2)]^{\frac{1}{2}})
\text{ a.s. on }
\sum_{j \in [n]} u_{j}^2 = \infty,
\end{align*}
if  the assumption (A2) is replaced by (A3).

%\item[(iv)]
%More generally for the multi-dimensional martingale difference case, 
%if  the assumption (A2) is replaced by
%$\sup_{i}E[\| \bepsilon_{i}\| ^{\alpha}| {\mathscr F}_{i-1}]<\infty, \text{ a.s. for some } \alpha %\in (0,2]$, 
%$s_n(\alpha) = (\sum_{j \in [n]} \|\bu_{j}\|^\alpha)^{\frac{1}{\alpha}}$,
%\begin{align*}
%    \sum_{j \in [n]} \bu_{j} \bepsilon_{j} = 
%O( s_n(\alpha) [log(s_n^\alpha(\alpha) + e)]^{\frac{1}{\alpha}+\delta})
%\text{ a.s. }
% \forall \delta > 0.
%\end{align*}
\end{itemize}
\label{thm:laiwei1982lemma2}
\end{lemma}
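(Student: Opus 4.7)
The plan is to treat $M_n = \sum_{j \in [n]} u_j \epsilon_j$ as a square-integrable martingale adapted to $\{\mathscr{F}_n\}$, since $u_j$ is $\mathscr{F}_{j-1}$-measurable and $\{\epsilon_j\}$ is a martingale difference sequence. Its predictable quadratic variation $\langle M \rangle_n = \sum_{j \in [n]} u_j^2 \E[\epsilon_j^2 \mid \mathscr{F}_{j-1}]$ is dominated almost surely by $C s_n^2$ for some constant $C > 0$ under (A2). The three parts will then follow from this structure combined with (a) stopping-time localization for part (i), (b) Kronecker's lemma together with a.s. convergence of a normalized martingale for part (ii), and (c) a sharper exponential martingale inequality for part (iii).

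For part (i), I would use a standard localization. Define the stopping time $\tau_c = \inf\{n : s_{n+1}^2 > c\}$; the stopped process $M_{n \wedge \tau_c}$ has predictable quadratic variation bounded by $C c$, hence is bounded in $L^2$ and converges almost surely by the $L^2$ martingale convergence theorem. On the event $\{s_\infty^2 < \infty\}$ there exists a (random) $c$ with $\tau_c = \infty$, so $M_n$ converges a.s. on that event.

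For part (ii), I would apply Kronecker's lemma with weights $w_j = s_j (\log s_j^2)^{1/2 + \delta}$. It suffices to show that $\sum_{j} (u_j / w_j)\, \epsilon_j$ converges a.s. on $\{s_\infty^2 = \infty\}$. Since $u_j/w_j$ is still $\mathscr{F}_{j-1}$-measurable, part (i) reduces the task to verifying $\sum_j (u_j/w_j)^2 < \infty$ on that event. Writing $u_j^2 = s_j^2 - s_{j-1}^2$ and comparing with the integral $\int dx / (x (\log x)^{1+2\delta})$ gives convergence whenever $\delta > 0$. Kronecker's lemma then yields $M_n / w_n \to 0$ a.s., which is the required $o(\cdot)$ statement. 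This is essentially Chow's local convergence theorem for martingales in a convenient form.

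For part (iii), the sharpening from exponent $1/2 + \delta$ to $1/2$ requires tighter tail control on $M_n$, which is where the higher moment condition (A3) enters. The approach is to apply a martingale exponential inequality (derived by truncating $\epsilon_j$ at a slowly growing threshold and invoking a Bernstein-type bound using the uniformly bounded $(2+\alpha)$-th conditional moment), and then to use Borel--Cantelli along a geometric subsequence with $s_{n_k}^2 \asymp \rho^k$. The main obstacle is interpolating between subsequence values when $s_n$ is random and only increases in jumps; fortunately $s_n^2$ is nondecreasing, so the fluctuation on each block is controlled by the ratio $s_{n_{k+1}} / s_{n_k} \leq \sqrt{\rho}$, and sending $\rho \downarrow 1$ after the Borel--Cantelli step recovers the optimal constant in the $O\bigl( s_n (\log s_n^2)^{1/2} \bigr)$ rate.
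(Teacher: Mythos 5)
The paper does not actually prove this lemma: it is quoted from \cite{lai1982least,chen2012identification}, with the text noting only that the estimates ``are based on the Kronecker lemma, the local convergence theorem and the strong law for martingales.'' Your plan reconstructs precisely that classical route --- Chow's local convergence theorem via localization for (i), Kronecker's lemma with weights $w_j = s_j(\log s_j^2)^{1/2+\delta}$ and the integral test $\sum_j (s_j^2 - s_{j-1}^2)/\bigl(s_j^2(\log s_j^2)^{1+2\delta}\bigr) < \infty$ for (ii), and truncation plus exponential bounds plus Borel--Cantelli along geometric blocks of $s_n^2$ for (iii) --- so in approach you are aligned with the proofs the paper points to, and (ii) in particular is complete and correct as sketched (you rightly check that $u_j/w_j$ remains $\mathscr{F}_{j-1}$-measurable, which holds because $s_j$ involves $u_j$, itself $\mathscr{F}_{j-1}$-measurable).

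There is one genuine flaw in (i) as written. Assumption (A2) gives $\sup_i \E[\epsilon_i^2 \mid \mathscr{F}_{i-1}] < \infty$ only \emph{almost surely}, i.e., the bound is a random variable, not a deterministic constant $C$. Consequently your stopped process $M_{n \wedge \tau_c}$ with $\tau_c = \inf\{n : s_{n+1}^2 > c\}$ has predictable quadratic variation bounded by $c \cdot \sup_j \E[\epsilon_j^2 \mid \mathscr{F}_{j-1}]$, which is not uniformly bounded, so the claim that the stopped martingale is bounded in $L^2$ fails as stated. The repair is standard: localize the predictable quadratic variation itself, setting $\tau_c = \inf\bigl\{n : \sum_{j=1}^{n+1} u_j^2\, \E[\epsilon_j^2 \mid \mathscr{F}_{j-1}] > c\bigr\}$, which is a stopping time because each summand is $\mathscr{F}_{j-1}$-measurable, and then $\langle M \rangle_{n \wedge \tau_c} \leq c$ deterministically; on $\{s_\infty^2 < \infty\}$ one has $\langle M \rangle_\infty < \infty$ a.s.\ under (A2), so $\tau_c = \infty$ for some rational $c$ and convergence follows as you intend. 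This is exactly the content of Chow's local convergence theorem, so the fix costs nothing conceptually.

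Part (iii) is only a plan, and it is the part where the real work lives: after truncating $\epsilon_j$ at a level of order $s_j/(\log s_j^2)^{1/2}$ you must recenter, and showing that the compensator $\sum_j u_j \E[\epsilon_j \mathbf{1}(|\epsilon_j| > c_j) \mid \mathscr{F}_{j-1}]$ and the untruncated remainder are both $O\bigl(s_n(\log s_n^2)^{1/2}\bigr)$ is where the $\alpha > 2$ conditional moment of (A3) is consumed; your sketch gestures at this but does not carry it out. Two small slips there: (A3) assumes a bounded $\alpha$-th conditional moment for some $\alpha > 2$, not a ``$(2+\alpha)$-th'' moment, and for an $O(\cdot)$ conclusion there is no ``optimal constant'' to recover, so the final $\rho \downarrow 1$ step is unnecessary --- any fixed $\rho > 1$ together with a maximal form of the exponential inequality over each block suffices.
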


The estimates given by these results are not as sharp as those given by the law of
the iterative logarithm \cite{chow2012probability} but the assumptions needed here
are much more general.

\section{Main Results} \label{sec:pres}
\subsection{Noiseless case} 

Suppose $\{\bbeta^n_k,\ \forall k \in [K]\}$ are optimal solutions to (\ref{eq:MIP})
for $p \in \{2, 1\}$ and $q \in \{2, 1, 0\}$.
\begin{mythm}
\label{thm:Noiseless}
Suppose Assumptions (A1) and (A4) hold in the MLR model, and
\begin{equation}
\liminf_{|\scrS|\rightarrow\infty} \lambda_{min}\bigg(\sum_{i \in \scrS \subset \Omega_k^{n}}
\bx_{i}\bx_{i}'\bigg) > 0,\text{ a.s.},\ \forall k \in [K]. 
\label{eq:condition_x_nonoise}
\end{equation}
Then we have the strong consistency, i.e.,
\[ 
\exists N \text{ s.t. } \bbeta^{n}_k =  \bbeta_{\pi(k)},
\text{ a.s.},  \forall n>N, k \in [K],
\]
where $\pi$ is a permutation of the $K$ clusters.

Furthermore, if the optimal solution of (\ref{eq:MIP}) is unique, or there are no
ties for assigning samples to clusters, i.e., $\forall i \in [n], \forall k \neq h
\in [K]$, $|\bx_i' (\beta_k-\beta_h)| >0,$ it follows
\begin{align}
\label{eq:set_converge}
\exists N \text{ s.t. } 
\hat{\Omega}^{n}_{k} = \Omega_{\pi(k)}^{n},\text{ a.s., } \forall n>N. 
\end{align}
\end{mythm}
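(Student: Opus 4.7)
The plan is to exploit the fact that in the noiseless regime the true parameters $\{\bbeta_k\}$ are feasible and achieve zero objective in (\ref{eq:MIP}) by (A4), so any optimal solution must fit every sample exactly within its assigned cluster. Setting $\bbeta^n_k=\bbeta_k$ with the true assignments yields $t_i=0$ for all $i$, so the optimum value is zero and any optimizer satisfies $t_i^n=0$ for every $i$. The first two MIP constraints then force $y_i=\bx_i'\bbeta^n_k$ whenever $c_{ki}^n=1$, and combining with $y_i=\bx_i'\bbeta_\ell$ on $\Omega_\ell^n$ gives the key identity $\bx_i'(\bbeta^n_k-\bbeta_\ell)=0$ for every $i\in\hat\Omega_k^n\cap\Omega_\ell^n$ and every pair $(k,\ell)$.

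Next I would apply a pigeonhole argument cluster by cluster. By (A1) there exists $a>0$ with $|\Omega_\ell^n|\geq an$ for all large $n$, so partitioning $\Omega_\ell^n$ into the $K$ estimated clusters produces an index $k_\ell(n)\in[K]$ with $|\scrS_{n,\ell}|\geq an/K$, where $\scrS_{n,\ell}:=\hat\Omega_{k_\ell(n)}^n\cap\Omega_\ell^n$. Since $|\scrS_{n,\ell}|\to\infty$, hypothesis (\ref{eq:condition_x_nonoise}) supplies a.s.\ a constant $c_\ell>0$ and some $N_\ell$ with $\lambda_{\min}\bigl(\sum_{i\in\scrS_{n,\ell}}\bx_i\bx_i'\bigr)\geq c_\ell$ for $n\geq N_\ell$. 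The identity from the previous paragraph then yields
\[
0=\sum_{i\in\scrS_{n,\ell}}\bigl(\bx_i'(\bbeta^n_{k_\ell(n)}-\bbeta_\ell)\bigr)^2\geq c_\ell\,\|\bbeta^n_{k_\ell(n)}-\bbeta_\ell\|^2,
\]
forcing $\bbeta^n_{k_\ell(n)}=\bbeta_\ell$ exactly, a.s., for all large $n$. To promote this to a permutation, I would verify injectivity of $\ell\mapsto k_\ell(n)$: if $k_\ell(n)=k_{\ell'}(n)$ with $\ell\neq\ell'$, the previous step would equate $\bbeta^n_{k_\ell(n)}$ to both $\bbeta_\ell$ and $\bbeta_{\ell'}$, contradicting (A1). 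An injective self-map of $[K]$ is a bijection, so $\pi:=(\ell\mapsto k_\ell(n))^{-1}$ is a permutation satisfying $\bbeta^n_k=\bbeta_{\pi(k)}$, which establishes the first claim.

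For (\ref{eq:set_converge}), I would substitute $\bbeta^n_k=\bbeta_{\pi(k)}$ into the residuals: for $i\in\Omega_\ell^n$ one has $|y_i-\bx_i'\bbeta^n_k|=|\bx_i'(\bbeta_\ell-\bbeta_{\pi(k)})|$, which vanishes when $\pi(k)=\ell$ and is strictly positive otherwise under the no-ties hypothesis; Lemma~1 (strict inclusion) then pins $\hat\Omega_k^n=\Omega_{\pi(k)}^n$. The uniqueness case is analogous, since the presence of any tie would produce a second optimizer by swapping the assignment of a tied point. The main obstacle I anticipate is interpreting (\ref{eq:condition_x_nonoise}) with the required uniformity: the bound on $\lambda_{\min}$ must hold a.s.\ for \emph{every} sequence of subsets $\scrS\subset\Omega_\ell^n$ with diverging cardinality, including the data-dependent choice $\scrS_{n,\ell}$ produced by the MIP. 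Once that uniformity is granted, the remaining pigeonhole and bijection steps are essentially finite and combinatorial.
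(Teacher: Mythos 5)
Your proposal is correct and follows essentially the same route as the paper's own proof: feasibility of the true parameters forces a zero optimal objective, a pigeonhole argument produces a large intersection $\hat{\Omega}^n_{k_\ell(n)} \cap \Omega^n_\ell$, the eigenvalue condition (\ref{eq:condition_x_nonoise}) turns the vanishing residuals on that intersection into exact recovery, and injectivity of $\ell \mapsto k_\ell(n)$ yields the permutation. Your handling of (\ref{eq:set_converge}) via Lemma~1 and the no-ties hypothesis is in fact spelled out more explicitly than the paper's one-line ``by contradiction,'' and your closing caveat about the uniformity required of (\ref{eq:condition_x_nonoise}) over data-dependent subsets is a fair reading of an assumption the paper also invokes without further justification.
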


\begin{proof}
$\forall k \in [K],\ \exists m(k) \in [K]$ such that 
\[ 
|\Omega_k^{n} \cap  \hat{\Omega}^{n}_{m(k)}| \geq |\Omega_k^{n} \cap  \hat{\Omega}^{n}_{h}|,\; \forall h \in [K].
\]
Consequently, 
\[
|\Omega_k^{n} \cap  \hat{\Omega}^{n}_{m(k)}| \geq |\Omega_k^{n}|/K = \Theta(n)
\rightarrow\infty, \text{ when } n \rightarrow\infty.
\]

From (\ref{eq:condition_x_nonoise}), $\exists N,$ such that $|\Omega_k^{n} \cap
\hat{\Omega}^{n}_{m(k)}|>N,$ and 
\begin{align}
    \label{eq:condition_x_nonoise2}
    \lambda_{min}\bigg(\sum_{i \in \Omega_k^{n} \cap  \hat{\Omega}^{n}_{m(k)}}
    \bx_{i}\bx_{i}'\bigg) > 0, \text{ a.s.} 
\end{align}

Since the true $\{\bbeta_k\}$ are a feasible solution of (\ref{eq:MIP}) and there is
no noise, the optimal objective value must be $0$, i.e., $y_i = \bx_i' \bbeta_k$ for
some $k$ and all $i$. It follows
 \begin{align*}
    y_i = \bx_i' \bbeta_k = \bx_i' \bbeta^n_{m(k)}, \quad \forall i \in \Omega_k^{n}
    \cap  \hat{\Omega}^{n}_{m(k)}.  
\end{align*}
As a result, 
\begin{align*}
    \bx_i'(\bbeta_k - \bbeta^n_{m(k)}) = 0,
     \end{align*}
and
\begin{equation} \label{eq:condition_x_nonoise4}
    \sum_{i \in \Omega_k^{n} \cap  \hat{\Omega}^{n}_{m(k)}} |\bx_i'(\bbeta_k - \bbeta^n_{m(k)})|^2 = 0. 
 \end{equation}

From the definition of the smallest eigenvalue, we have
\begin{multline}     \label{eq:condition_x_nonoise3}
   \sum_{i \in \Omega_k^{n} \cap  \hat{\Omega}^{n}_{m(k)}} |\bx_i'(\bbeta_k -
   \bbeta^n_{m(k)})|^2 \\
    \geq
    \lambda_{min}\bigg(\sum_{i \in \Omega_k^{n} \cap  \hat{\Omega}^{n}_{m(k)}}
    \bx_{i}\bx_{i}'\bigg) \|\bbeta_k - \bbeta^n_{m(k)} \|^2.  
 \end{multline}
 
Accordingly, when $|\Omega_k^{n} \cap \hat{\Omega}^{n}_{m(k)}|>N,$ it follows from
equations (\ref{eq:condition_x_nonoise2}), (\ref{eq:condition_x_nonoise4}) and
(\ref{eq:condition_x_nonoise3}) that
$\bbeta_k - \bbeta^n_{m(k)} = 0$.

Next, we show the mapping $m(\cdot)$ is a bijection by contradiction.  Assume there
exists $m(k) = m(h)$ for $k \neq h$. Then, $\bbeta_k = \bbeta^n_{m(k)} =
\bbeta^n_{m(h)} = \bbeta_{h}$ when $n$ is large enough, which contradicts the
assumption that the clusters are different.  Thus, $\pi = m^{-1}$ is a permutation.
Finally, we can prove the cluster set equality (\ref{eq:set_converge}) by
contradiction.
\end{proof}

\begin{myremark}
\begin{enumerate}[(i)]
 \item Equation (\ref{eq:condition_x_nonoise}) on $\bx_i$ is not only
   sufficient but also necessary. 
  $\scrS$ in (\ref{eq:condition_x_nonoise}) is the subset of $\Omega_k^{n}$.
      For instance, if $\bx_i= (1,1),\ \forall i \in
   \Omega_k^{n}$, and $\lambda_{min}(\sum_{i \in \Omega_k^{n} } \bx_{i}\bx_{i}^{'}) =
   0,$ which violates Equation (\ref{eq:condition_x_nonoise}), and model parameters
   are not identifiable no matter which method is being used.
\item Thm.~\ref{thm:Noiseless} holds for either $p=1$ or $p=2$.  From a computational
  complexity aspect, a linear MIP ($p=1, q=1$) can be solved faster than a quadratic
  MIP ($p=2$).
\item Even if we have the strong consistency for the model parameters, i.e.,
  $\bbeta^{n}_k \to \bbeta_{\pi(k)},\text{ a.s.}, \forall k \in [K]$, we may not have
  $\hat{\Omega}^{n}_{k} \to \Omega_{\pi(k)}^{n},\text{ a.s.}$ when $n \to \infty$.
  For instance, if $\bx_i= \mathbf{0}$, for some $i$, then the sample $i$ may be
  assigned to any cluster.
\end{enumerate}
\end{myremark}

Thm.~\ref{thm:Noiseless} holds for any $p>0$ and $q\ge 0$.  Assumption
(\ref{eq:condition_x_nonoise}) is equivalent to requiring that every cluster has at
least $d$ linearly independent measurements.  The exact convergence can also be
achieved by other methods, e.g., the second-order cone program in
\cite{hand2018convex}. However, to the best of our knowledge, our assumptions are the
weakest since we do not need a ``sufficient-separation'' and a balanced measurement
assumption on the data as in \cite{hand2018convex}.

\subsection{Noisy case with a single cluster} 
Linear regularized regression can be seen as a specific case of MLR with a single
cluster.  In this section, we assume $K=1$ and consider the presence of noise.  We
establish strong consistency for the model parameters under general covariate and
noise distributions. Consider the regularized regression problem
\begin{align} \label{eq:LS}
\min_{\bbeta} & \frac{1}{n} \sum\limits_{i \in [n]} |y_i - \bx_i' \bbeta |^2  \\ 
\text{s.t. } 
& \| \bbeta  \|_q \leq d_{q},\ \quad \forall k \in [K], \notag
\end{align}
where $q \in \{2, 1, 0\}$, corresponding to ridge regression, LASSO and regression
with a subset selection constraint, respectively. For $q=0$, the problem can be
formulated as a MIP problem~\cite{bertsimas2016best}. Let $\bbeta^n$ denote an
optimal solution of (\ref{eq:LS}).

Let $\lambda_{max}(n)= \lambda_{max} (\sum_{i=1}^{n} \bx_i\bx_i')$ and $
\lambda_{min}(n)= \lambda_{min} (\sum_{i=1}^{n} \bx_i\bx_i') $.
\begin{mythm}
\label{thm:noise2order}
Suppose that Assumptions (A1), (A2) and (A4) hold for (\ref{eq:LS}), and let $\bx_i$
be ${\mathscr F}_{i-1}$ measurable for every $i$.  If 
\begin{align*}
    \lambda_{min}(n) \to \infty, \text{ a.s.,}
\end{align*}
then for all $\delta > 0$,
\[
\|\bbeta^{n} - \bbeta \|
\leq
o(\lambda^{\frac{1}{2}}_{max} (n)   [log \lambda_{max} (n)]^{\frac{1}{2}+\delta}
/\lambda_{min}(n)),  
\text{ a.s.} 
\]
\end{mythm}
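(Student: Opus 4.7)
The plan is to combine the standard ``basic inequality'' for constrained least squares (obtained from comparing the optimal objective value to the value at the true parameter) with the coordinate-wise martingale-difference sum estimates of Lemma~\ref{thm:laiwei1982lemma2}.

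First I would observe that by Assumption (A4) the true parameter $\bbeta$ is feasible for (\ref{eq:LS}). Since $\bbeta^n$ is optimal, the squared loss at $\bbeta^n$ is no larger than at $\bbeta$. Substituting $y_i=\bx_i'\bbeta+\epsilon_i$, expanding the squares, and writing $\bdelta^n:=\bbeta^n-\bbeta$ gives
\[
\sum_{i=1}^n (\bx_i'\bdelta^n)^2 \le 2(\bdelta^n)'\sum_{i=1}^n \bx_i\epsilon_i.
\]
Lower-bounding the left-hand side by $\lambda_{min}(n)\|\bdelta^n\|^2$ and applying Cauchy--Schwarz on the right (the case $\bdelta^n=\bzero$ is trivial) immediately yields
\[
\|\bdelta^n\|\le \frac{2}{\lambda_{min}(n)}\Bigl\|\sum_{i=1}^n \bx_i\epsilon_i\Bigr\|.
\]

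Next I would control the martingale sum componentwise via Lemma~\ref{thm:laiwei1982lemma2}. For each coordinate $j\in[d]$, take $u_i=x_{ij}$ (which is $\scrF_{i-1}$-measurable by hypothesis) and $s_n^2=\sum_{i\le n} x_{ij}^2=[A_n]_{jj}$, where $A_n:=\sum_{i} \bx_i\bx_i'$. The elementary eigenvalue inequality $\lambda_{min}(n)\le [A_n]_{jj}$ together with the hypothesis $\lambda_{min}(n)\to\infty$ guarantees $s_n^2\to\infty$ a.s., so Lemma~\ref{thm:laiwei1982lemma2}(ii) applies. Moreover $s_n^2\le \mathrm{tr}(A_n)\le d\,\lambda_{max}(n)$, so that, for every $\delta>0$,
\[
\Bigl|\sum_{i=1}^n x_{ij}\epsilon_i\Bigr| = o\bigl(s_n[\log s_n^2]^{1/2+\delta}\bigr) = o\bigl(\lambda_{max}^{1/2}(n)[\log\lambda_{max}(n)]^{1/2+\delta}\bigr)
\]
almost surely. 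Summing in quadrature over the $d$ coordinates (the $\sqrt{d}$ factor is absorbed in the little-$o$) and substituting into the previous display delivers the stated rate.

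The only subtle step is verifying the divergence $s_n^2\to\infty$ that unlocks Lemma~\ref{thm:laiwei1982lemma2}(ii); this follows from $\lambda_{min}(A_n)\le \min_j [A_n]_{jj}$ and the assumption $\lambda_{min}(n)\to\infty$. A sharper rate of order $\sqrt{Q_n/\lambda_{min}(n)}$ is in fact available via Cauchy--Schwarz in the $A_n$-inner product together with the $Q_n$ estimate of Lemma~\ref{thm:laiwei1982lemma1Q_n}(ii); the statement of the theorem apparently retains the slightly coarser but still sublinear bound with a $\lambda_{max}^{1/2}$ numerator, so the plain Euclidean Cauchy--Schwarz used above is sufficient.
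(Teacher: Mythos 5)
Your proposal is correct and follows essentially the same route as the paper's own proof: the basic feasibility inequality from (A4), the lower bound $\lambda_{min}(n)\|\bbeta^{n}-\bbeta\|^{2}\leq\sum_{i\in[n]}|\bx_i'(\bbeta^{n}-\bbeta)|^{2}$, the coordinate-wise application of Lemma~\ref{thm:laiwei1982lemma2}(ii) combined with the trace bound $s_{nj}^{2}\leq \text{Tr}\big(\sum_{i}\bx_i\bx_i'\big)=O(\lambda_{max}(n))$, and a final Euclidean Cauchy--Schwarz step. Your explicit check that $s_{nj}^{2}\to\infty$ (via $\lambda_{min}(n)\leq[\sum_i\bx_i\bx_i']_{jj}$), which activates part (ii) of the lemma, is a detail the paper leaves implicit, and your closing observation that the $Q_n$ estimate of Lemma~\ref{thm:laiwei1982lemma1Q_n} would give the sharper classical rate is also sound, though not needed for the stated bound.
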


\begin{proof}
Since $\bbeta$ is a feasible solution of (\ref{eq:LS}) (cf. Ass.~(A4)), we have
\[
\sum\limits_{i \in [n]} |y_i - \bx_i' \bbeta^{n} |^2 
\leq
\sum\limits_{i \in [n]} |y_i - \bx_i' \bbeta |^2
=
\sum\limits_{i \in [n]} |\epsilon_i |^2. 
\]
Substituting $y_i=\bx_i' \bbeta + \epsilon_i$ in the l.h.s., we obtain
\[
\sum\limits_{i \in [n]} |\epsilon_i- \bx_i' (\bbeta^{n} - \bbeta ) |^2
\leq
\sum\limits_{i \in [n]} |\epsilon_i |^2, 
\]
and by expanding the l.h.s. we obtain
\begin{equation}
\label{eq:proof1}
\sum\limits_{i \in [n]} | \bx_i' (\bbeta^{n} - \bbeta) |^2
\leq
2 \sum\limits_{i \in [n]}  \epsilon_i \bx_i' (\bbeta^{n} - \bbeta ).
\end{equation}
From the definition of the minimum eigenvalue, we have
\begin{equation}
\label{eq:proof2}
\sum\limits_{i \in [n]} | \bx_i' (\bbeta^{n} - \bbeta ) |^2 
\geq
\lambda_{min}(n) \|\bbeta^{n} - \bbeta \|^2.
\end{equation}

Next, we study the r.h.s. of (\ref{eq:proof1}) in order to bound the convergence rate
of $\|\bbeta^{n} - \bbeta \|$.  From Lemma \ref{thm:laiwei1982lemma2}(ii), we have
that for any $j \in [d]$,
\[
 \sum\limits_{i \in [n]}  \epsilon_i x_{ij} =
o(s_{nj} [log(s_{nj}^2)]^{\frac{1}{2}+\delta})\
\text{a.s., where}\ s_{nj} = \bigg(\sum_{i \in [n]} x_{ij}^2\bigg)^{\frac{1}{2}}.
\]
For any $j \in [d]$,
\begin{align*}
s_{nj} & = \bigg(\sum_{i \in [n]} x_{ij}^2\bigg)^{\frac{1}{2}}
\leq
\bigg(\sum_{i \in [n]} \sum_{j \in [d]} x_{ij}^2\bigg)^{\frac{1}{2}} \\
& \leq 
\left( \text{Tr}\bigg(\sum_{i=1}^{n} \bx_i\bx_i' \bigg) \right)^{\frac{1}{2}}
= \Theta\bigg( \lambda^{\frac{1}{2}}_{max}(n) \bigg),
\end{align*}
where $\text{Tr}(\cdot)$ denotes the trace of a matrix. 
By combining the above two equations, we have for any $j \in [d]$,
\[
 \sum\limits_{i \in [n]}  \epsilon_i x_{ij} =
o(\lambda^{\frac{1}{2}}_{max} (n)   [log \lambda_{max} (n)]^{\frac{1}{2}+\delta})
\text{ a.s., }
\]
and thus,
\[
\left\| \sum_{i \in [n]}  \epsilon_i \bx_i \right\|=
o(\lambda^{\frac{1}{2}}_{max} (n)   [log \lambda_{max} (n)]^{\frac{1}{2}+\delta}) \text{ a.s.}
\]
We bound the r.h.s. of (\ref{eq:proof1}) as
\begin{align} \label{pr3}
 2 \sum_{i \in [n]}  \epsilon_i \bx_i'(\bbeta^{n} - \bbeta )
& \leq 
2 \left\| \sum_{i \in [n]}  \epsilon_i \bx_i \right\|  \|\bbeta^{n} - \bbeta \|
\notag \\
&\leq
 o(\lambda^{\frac{1}{2}}_{max} (n)   [log \lambda_{max} (n)]^{\frac{1}{2}+\delta})
\|\bbeta^{n} - \bbeta \|.
\end{align}
Combining (\ref{eq:proof1}), (\ref{eq:proof2}) and (\ref{pr3}),
we obtain
\[
\lambda_{min}(n) \|\bbeta^{n} - \bbeta \|^2 
\leq
o(\lambda^{\frac{1}{2}}_{max} (n)   [log \lambda_{max} (n)]^{\frac{1}{2}+\delta})
\|\bbeta^{n} - \bbeta \|.
\]
\end{proof}

If the Assumption (A2) is replaced by (A3), we have a stronger version of the
previous theorem (proof omitted). 
\begin{mythm}
\label{thm:noise3order}
Suppose that Assumptions (A1), (A3) and (A4) hold for (\ref{eq:LS}), 
and let $\bx_i$ be ${\mathscr F}_{i-1}$ measurable for every $i$.
 If 
\[
\lambda_{min}(n) \to \infty, \text{ a.s.,}
\]
then we have 
\[
\|\bbeta^{n} - \bbeta \| 
\leq 
O(\lambda^{\frac{1}{2}}_{max} (n)   [log \lambda_{max} (n)]^{\frac{1}{2}}
/\lambda_{min}(n)  ) 
\text{ a.s.} 
\]
\end{mythm}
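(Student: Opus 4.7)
The plan is to mirror the proof of Theorem~\ref{thm:noise2order} essentially line by line, replacing the use of Lemma~\ref{thm:laiwei1982lemma2}(ii) with Lemma~\ref{thm:laiwei1982lemma2}(iii), which is precisely the tool that upgrades a little-$o$ bound with an extra $\delta$ exponent to a sharp big-$O$ bound under the stronger moment condition (A3). Since $\bbeta$ remains feasible for (\ref{eq:LS}) by (A4), the optimality inequality $\sum_{i \in [n]} |y_i - \bx_i'\bbeta^n|^2 \leq \sum_{i \in [n]} \epsilon_i^2$ still holds; substituting $y_i = \bx_i'\bbeta + \epsilon_i$ and expanding yields exactly
\[
\sum_{i \in [n]} |\bx_i'(\bbeta^n - \bbeta)|^2 \leq 2 \sum_{i \in [n]} \epsilon_i \bx_i'(\bbeta^n - \bbeta),
\]
and the minimum-eigenvalue estimate $\sum_{i \in [n]} |\bx_i'(\bbeta^n - \bbeta)|^2 \geq \lambda_{min}(n)\|\bbeta^n - \bbeta\|^2$ carries over unchanged.

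The only substantive modification is in bounding the cross term. First I would invoke Lemma~\ref{thm:laiwei1982lemma2}(iii) coordinate-wise: for each $j \in [d]$, applying it with $u_i = x_{ij}$ (which is ${\mathscr F}_{i-1}$-measurable by hypothesis), and noting that $\sum_{i \in [n]} x_{ij}^2 \to \infty$ a.s.\ follows from $\lambda_{min}(n) \to \infty$ together with Assumption (A1) ensuring no coordinate is identically zero, gives
\[
\sum_{i \in [n]} \epsilon_i x_{ij} = O\!\left(s_{nj}[\log s_{nj}^2]^{1/2}\right) \text{ a.s.}
\]
Then the same bound $s_{nj} \leq [\mathrm{Tr}(\sum_i \bx_i\bx_i')]^{1/2} = \Theta(\lambda_{max}^{1/2}(n))$ used in the previous proof, together with the monotonicity of $x \mapsto x[\log x]^{1/2}$ for large $x$, converts the per-coordinate bound into $\|\sum_{i \in [n]} \epsilon_i \bx_i\| = O(\lambda_{max}^{1/2}(n)[\log \lambda_{max}(n)]^{1/2})$ a.s.

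Finally, I would apply Cauchy--Schwarz to the cross term exactly as in (\ref{pr3}) but with the sharper $O(\cdot)$ rate, obtaining
\[
\lambda_{min}(n)\|\bbeta^n - \bbeta\|^2 \leq O\!\left(\lambda_{max}^{1/2}(n)[\log \lambda_{max}(n)]^{1/2}\right)\|\bbeta^n - \bbeta\|,
\]
and dividing through by $\lambda_{min}(n)\|\bbeta^n-\bbeta\|$ (handling the trivial case $\bbeta^n=\bbeta$ separately) yields the stated conclusion.

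There is essentially no new obstacle: the only minor subtlety is verifying that the coordinate-wise divergence $s_{nj}^2 \to \infty$ needed to invoke Lemma~\ref{thm:laiwei1982lemma2}(iii) follows from $\lambda_{min}(n) \to \infty$. This is immediate since $\lambda_{min}(n) \leq \sum_{i \in [n]} x_{ij}^2$ for any coordinate direction via testing the quadratic form against $\be_j$, so divergence of the smallest eigenvalue forces divergence of every diagonal sum. All remaining steps are transcriptions of the Theorem~\ref{thm:noise2order} argument, and the $\delta$-free exponent $1/2$ propagates cleanly through the trace bound and the final division by $\lambda_{min}(n)$.
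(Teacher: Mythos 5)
Your proposal is correct and follows exactly the route the paper intends: the paper omits the proof of Theorem~\ref{thm:noise3order} but states it is the Theorem~\ref{thm:noise2order} argument with Assumption (A2) upgraded to (A3), i.e., with Lemma~\ref{thm:laiwei1982lemma2}(iii) replacing part (ii), which is precisely your substitution, and your verification that $s_{nj}^2 \to \infty$ via $\lambda_{min}(n) \leq \sum_{i \in [n]} x_{ij}^2$ correctly fills the one detail worth checking (note that this eigenvalue test alone suffices, so your earlier appeal to (A1) there is unnecessary).
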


As a point of comparison, the classical convergence rate for least square estimates
of unregularized linear regression subject to martingale difference noise is given by:
\begin{align} \label{conv-classical}
\|\bbeta^{n} - \bbeta \| & =
o(  [log \lambda_{max} (n)]^{\frac{1}{2}+\delta}
/\lambda^{\frac{1}{2}}_{min}(n)  ) 
\text{ a.s. for } \alpha=2, \notag\\
& = 
O(  [log \lambda_{max} (n)]^{\frac{1}{2}}
/\lambda^{\frac{1}{2}}_{min}(n)  ) 
\text{ a.s. for }   \alpha>2.
\end{align}
The convergence rate in Theorem \ref{thm:noise2order} and Theorem
\ref{thm:noise3order} is slower than the unregularized linear regression case in
general. However, it can be seen that for well-conditioned data, i.e., $\lambda_{max}
(n) \sim \lambda_{min} (n)$, we have the same convergence rate with the unregularized
case. The following Corollary outlines sufficient conditions to that effect.

\begin{mycorollary}
Suppose $\{\bx_i\}$ are i.i.d. random vectors with $\E[\bx_i\bx_i']$ being a positive
definite matrix. Suppose $\{\epsilon_i\}$ are i.i.d. random variables, independent of
the $\bx_i$, with zero mean and variance $\sigma^2>0$. Then, we have
\[
\|\bbeta^{n} - \bbeta \|^2 
= 
o(n^{-\frac{1}{2}}   [log (n)]^{\frac{1}{2}+\delta} ),
\text{ a.s. }
 \forall \delta > 0.
\]
Furthermore, if $\E[|\epsilon_i|^\alpha] < \infty, \text{ a.s. for some } \alpha>2$,
we have
\begin{equation} \label{eq:coro1_3order}
\|\bbeta^{n} - \bbeta \|^2 
= 
O(n^{-\frac{1}{2}}   [log (n)]^{\frac{1}{2}} ),
\text{ a.s. }
\end{equation}
\end{mycorollary}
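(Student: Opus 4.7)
The plan is to reduce the corollary directly to Theorems~\ref{thm:noise2order} and~\ref{thm:noise3order} by pinning down the almost-sure growth rates of $\lambda_{\min}(n)$ and $\lambda_{\max}(n)$ in the i.i.d.\ design setting.

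First I would verify the hypotheses of Theorem~\ref{thm:noise2order}. Choosing the filtration $\mathscr{F}_i = \sigma(\bx_1,\ldots,\bx_{i+1},\epsilon_1,\ldots,\epsilon_i)$, each $\bx_i$ is $\mathscr{F}_{i-1}$-measurable, and because the i.i.d., zero-mean, variance-$\sigma^2$ noise $\{\epsilon_i\}$ is independent of $\{\bx_i\}$, we have $\E[\epsilon_i \mid \mathscr{F}_{i-1}] = 0$ and $\E[|\epsilon_i|^2 \mid \mathscr{F}_{i-1}] = \sigma^2$, so (A2) is satisfied; under the stronger moment hypothesis $\E[|\epsilon_i|^\alpha]<\infty$, (A3) holds by the same argument. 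Assumption (A4) is trivial once $d_q$ is taken sufficiently large, and (A1) is automatic for $K=1$.

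Next, I would apply the strong law of large numbers to the i.i.d.\ symmetric matrix sequence $\{\bx_i\bx_i'\}$, which is integrable since $\bSigma := \E[\bx_i\bx_i']$ is finite and positive definite. This yields
\[
\frac{1}{n}\sum_{i=1}^{n}\bx_i\bx_i' \longrightarrow \bSigma, \text{ a.s.}
\]
Because $\lambda_{\min}$ and $\lambda_{\max}$ are Lipschitz continuous functions on the space of symmetric matrices (Weyl's inequality), it follows that $\lambda_{\min}(n)/n \to \lambda_{\min}(\bSigma) > 0$ and $\lambda_{\max}(n)/n \to \lambda_{\max}(\bSigma) < \infty$ almost surely. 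Hence $\lambda_{\min}(n)=\Theta(n)$ and $\lambda_{\max}(n)=\Theta(n)$ a.s., which in particular gives $\lambda_{\min}(n)\to\infty$ a.s., so Theorem~\ref{thm:noise2order} applies.

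Plugging these rates into the bound of Theorem~\ref{thm:noise2order} yields
\[
\|\bbeta^n-\bbeta\| \leq o\!\left(\frac{n^{1/2}(\log n)^{1/2+\delta}}{n}\right) = o\!\left(n^{-1/2}(\log n)^{1/2+\delta}\right)\ \text{a.s.},
\]
and squaring gives $\|\bbeta^n-\bbeta\|^2 = o(n^{-1}(\log n)^{1+2\delta})$, which in particular implies the first claimed bound (for a suitable relabeling of $\delta$). The same substitution in Theorem~\ref{thm:noise3order} under (A3) produces (\ref{eq:coro1_3order}). The only step that requires any genuine care is the continuity-of-eigenvalues argument needed to upgrade the matrix SLLN to growth rates for $\lambda_{\min}(n)$ and $\lambda_{\max}(n)$; once that is handled, the corollary is pure substitution, so I do not anticipate any substantive obstacle.
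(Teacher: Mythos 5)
Your proposal is correct and follows essentially the same route as the paper's proof: the almost-sure strong law of large numbers for $\frac{1}{n}\sum_{i\in[n]}\bx_i\bx_i'$ yields $\lambda_{min}(n)=\Theta(n)$ and $\lambda_{max}(n)=\Theta(n)$ a.s., and substituting these rates into Theorems~\ref{thm:noise2order} and~\ref{thm:noise3order} gives the two claimed bounds. The extra details you supply --- the explicit filtration making $\bx_i$ $\mathscr{F}_{i-1}$-measurable and verifying (A1)--(A4), the eigenvalue-continuity step upgrading the matrix SLLN to eigenvalue growth rates, and the observation that squaring the bound from Theorem~\ref{thm:noise2order} actually yields something stronger than the stated $\|\bbeta^{n}-\bbeta\|^2$ rate --- merely fill in steps the paper leaves implicit.
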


\begin{proof}
From the strong law of large numbers,
\[
  \lim_{n\rightarrow\infty} \frac{1}{n} (\sum_{i \in [n]}^{}   \bx_i\bx_i' )
    =
    \E[\bx_i\bx_i'],\; \text{ a.s. }
\]
It follows 
\[
     \lambda_{max}(n)  = \Theta(n), \qquad \lambda_{min}(n)  = \Theta(n).
\]
Thus, from Thm.~\ref{thm:noise2order} we have
\begin{align*}
\|\bbeta^{n} - \bbeta \|
\leq &
o(\lambda^{\frac{1}{2}}_{max} (n)   [log \lambda_{max} (n)]^{\frac{1}{2}+\delta}
/\lambda_{min}(n)  ) \\
= &
o(n^{-\frac{1}{2}}   [log (n)]^{\frac{1}{2}+\delta} ),
\text{ a.s. }
 \forall \delta > 0.
\end{align*}
Similarly, using Thm.~\ref{thm:noise3order} we can prove (\ref{eq:coro1_3order}).
\end{proof}

We point out that our setting is more general, including ridge regression, LASSO and
regression with subset selection.  The convergence rate is sharper in terms of the
sample size than the rate achieved for LASSO in \cite{chatterjee2013assumptionless}.
In addition, our noise assumptions are more general and weaker than the Gaussian
noise used in \cite{chatterjee2013assumptionless}.

\subsection{A counterexample for the noisy case with two clusters}

Next, we provide a counterexample indicating that the presence of noise may prevent
convergence to the true coefficients when we have more than a single cluster.

Consider the 2-cluster MLR model where the data $(x_i, y_i) \in \mathbb{R}^{2}$ are
generated as follows:
\begin{align*}
x_i =& 1, \\
\epsilon_{i} \in & \{ 1, -1 \} \text{ with probability } 0.5,\\
\beta_k \in & \{ \delta, 0 \} \text{ with probability } 0.5,\\
y_i =& x_i \beta_k + \sigma \epsilon_{i} \in \{ \sigma, -\sigma, \delta+\sigma, \delta-\sigma\}, 
\end{align*}
where $\{\epsilon_i\}$ are i.i.d. random variables with zero mean and positive
variance.  If $\delta < \sigma$, the optimal solution of (\ref{eq:MIP}) will be
different from the ground truth parameters. In particular, the optimal objective
function value of (\ref{eq:MIP}), is smaller than the objective function value of the
ground truth parameters, and depend on $\delta$ rather than $\sigma$. 

Not only (\ref{eq:MIP}), but also other methods may fail to recover the ground truth
parameters since they are ``unidentifiable,'' i.e., two sets of parameters can
generate the same distribution.  This counterexample shows strong consistency may
fail to hold under the weakest assumptions used in our earlier analysis.

\section{Numerical results} \label{sec:sim}
In this section we provide numerical results on the convergence of parameters
estimates in MLR obtained by formulation (\ref{eq:MIP}). Computations were performed
with GUROBI 8.0 and python 3.6.5 with 16 CPUs on the Boston University Shared Computing Cluster.
The results below coincide with the intuition that if the $K$ clusters are
well-separated, the convergence of the estimates becomes almost equivalent to having
$K$ separate linear regression models.  
%We have also done simulation results for
%noiseless case and results coincide with the theoretical results in Theorem
%\ref{thm:Noiseless}.  
%We note that in linear regression, it is possible to normalize the data so that standard normalization to get rid of the
%constant term.  However, in MLR, it is nontrivial for the existence of the constant
%term since the clusters are unknown.
\subsection{MLR under Gaussian noise}
Consider a model where the 
data $(\bx_i, y_i) \in \mathbb{R}^{3}$ are generated independently by 
\begin{align*} 
x_{i, 1} \sim & UNIFORM(0, 1), \quad x_{i, 2} = 1, \\
 \bbeta_1 = & (-0.93, 0.1), \quad \bbeta_2 = (0,0),\\ %(-1.61, 1.25)
\epsilon_{i} \sim & N(0, 1),\\
% \beta_k \in & \{ \delta, 0 \} \text{ with probability } 0.5,\\
y_i = & \bx_i' \bbeta_k + 0.01 \epsilon_{i}, \text{ for some } k \in [2].
\end{align*}
The first element of of $\bx_i$ consists of random samples from the uniform
distribution over $[0, 1)$.  The second element of $\bx_i$ is constant.
  $\epsilon_{i}$ are drawn from the standard normal distribution. One-half of the
  data samples are from Cluster 1. Fig.~\ref{figure:MIP_Gaussian} plots the evolution
  of $\|\bbeta^n_k - \bbeta_{k}\|$ as a function of the number of samples $n$ used in
  solving problem (\ref{eq:MIP}), where we used $p=1$, $M=10$, did not include a
  regularization constraint, and set the time limit for each model in GUROBI to 2
  hours.
\begin{figure}[ht]
	\begin{centering}
	\includegraphics[width=\columnwidth]{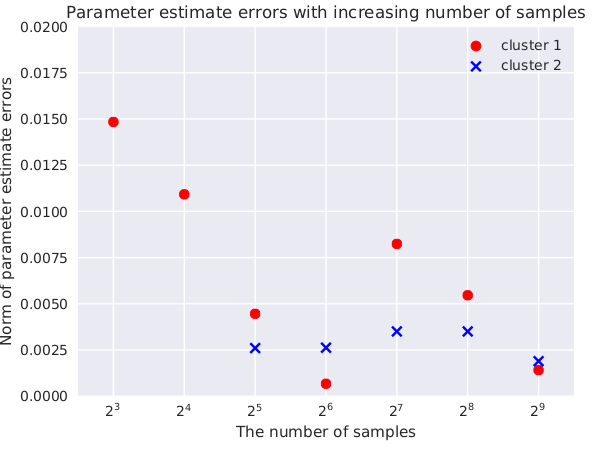}
		\caption{Gaussian noise case.}
		\label{figure:MIP_Gaussian}
	\end{centering}
\end{figure}

\subsection{MLR under uniform noise}
Consider now a model where the data samples $(\bx_i, y_i) \in \mathbb{R}^{3}$
are generated independently by
\begin{align*}
x_{i, 1} \sim & UNIFORM(0, 1), \quad x_{i, 2} = 1, \\
 \bbeta_1 = & ( -1.61, 1.25 ), \quad  \bbeta_2 = (0,0),\\ %(-1.61, 1.25)
\epsilon_{i} \sim & UNIFORM(-1, 1),\\
% \beta_k \in & \{ \delta, 0 \} \text{ with probability } 0.5,\\
y_i =& \bx_i' \bbeta_k + 0.01 \epsilon_{i}, \text{ for some } k \in [2].
\end{align*}
The first element of $\bx_i$ consists of random samples from the uniform distribution
over $[0, 1)$, and the second element is the constant $1$.  $\epsilon_{i}$ are drawn
  from the uniform distribution over $[-1, 1)$. One-half of the samples are generated
    from Cluster 1. Fig. \ref{figure:MIP_Uniform} plots $\|\bbeta^n_k - \bbeta_{k}
    \|$ as a function of the number of samples used in (\ref{eq:MIP}).  
\begin{figure}[ht]
	\begin{centering}
\includegraphics[width=\columnwidth]{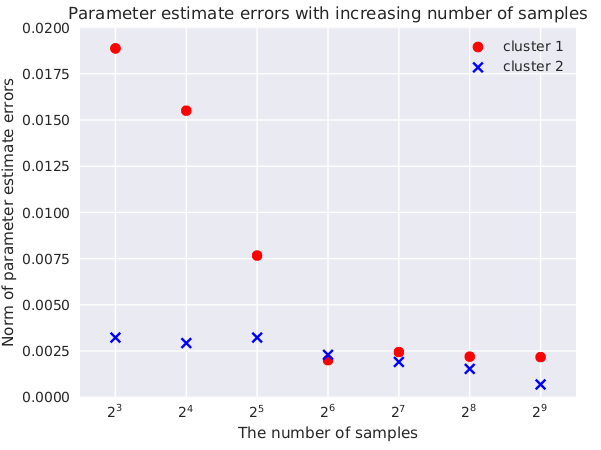}
		\caption{Uniform noise case.}
		\label{figure:MIP_Uniform}
	\end{centering}
\end{figure}

\section{Conclusions} \label{sec:con}

We established the convergence of parameter estimates for regularized mixed linear
regression models with multiple components in the noiseless case.  Regularized linear
regression can be seen as a specific case of MLR with a single cluster.  We establish
strong consistency and characterize the convergence rate of parameter estimates for
such regression models subject to martingale difference noise under very weak
assumptions on the data distribution.

To the best of our knowledge, our paper is the first to study strong consistency of
parameter estimates for mixed linear regression models under general noise conditions
and general feature conditions rather than convergence with high probability.  It can
be used directly and extended in many areas, including but not limited to system
identification and control, econometric theory and time series analysis.

\bibliographystyle{IEEEtran}
\bibliography{MIP_LS_wty-final}
% \bibliography{/home/yannisp/Private/bib/abbrev,/home/yannisp/Private/bib/IEEEabrv,/home/yannisp/Private/bib/communications,/home/yannisp/Private/bib/my,/home/yannisp/Private/bib/optimization,/home/yannisp/Private/bib/stochastics,/home/yannisp/Private/bib/various,/home/yannisp/Private/bib/bio,/home/yannisp/Private/bib/statistics,/home/yannisp/Private/bib/computers,MIP_LS_wty-final}

\end{document}